\def\Pr{\mathbb{P}}
\def\dqed{\relax\tag*{\qed}}
\DeclareMathOperator*{\E}{\mathbb{E}}
\newcommand{\e}{\epsilon}
\newcommand{\set}[1]{\{#1\}}
\newcommand{\ig}[2]{\includegraphics[scale=#1]{#2}}
\newcommand{\ignore}[1]{}
\newtheorem{theorem}{Theorem} 
\newtheorem{lemma}{Lemma} 
\newtheorem{corollary}{Corollary} 
\newtheorem{definition}{Definition} 
\newenvironment{proof*}{\trivlist
\item[\hskip\labelsep{\it\proofname}{.}]}
\title{Tight Lower Bound on the Probability of a Binomial Exceeding
  its Expectation}
\author{Spencer Greenberg \inst{1} \and
Mehryar Mohri \inst{1, 2}}
\institute{Courant Institute of Mathematical Sciences, \\
251 Mercer Street, New York, NY 10012. \and
Google Research, \\
76 Ninth Avenue, New York, NY 10011. }
\authorrunning{Greenberg and Mohri}
\begin{document}
\maketitle

\begin{abstract}

  We give the proof of a tight lower bound on the probability that a
  binomial random variable exceeds its expected value. The inequality
  plays an important role in a variety of contexts, including the
  analysis of relative deviation bounds in learning theory and generalization bounds for unbounded loss functions.

\end{abstract}

\section{Motivation}

This paper presents a tight lower bound on the probability that a
binomial random variable exceeds its expected value. If the binomial
distribution were symmetric around its mean, such a bound would be
trivially equal to $\frac{1}{2}$. And indeed, when the number of
trials $m$ for a binomial distribution is large, and the probability
$p$ of success on each trial is not too close to $0$ or to $1$, the
binomial distribution is approximately symmetric. With $p$ fixed,
and $m$ sufficiently large, the de Moivre-Laplace theorem tells us
that we can approximate the binomial distribution with a normal
distribution. But, when $p$ is close to $0$ or $1$, or the number of
trials $m$ is small, substantial asymmetry around the mean can arise.
Figure~\ref{fig:binomialExamples} illustrates this by showing the
binomial distribution for different values of $m$ and $p$.

The lower bound we prove has been invoked several times in the machine
learning literature, starting with work on relative deviation bounds
by Vapnik \cite{Vapnik98}, where it is stated without proof. Relative
deviation bounds are useful bounds in learning theory that provide
more insight than the standard generalization bounds because the
approximation error is scaled by the square root of the true error. In
particular, they lead to sharper bounds for empirical risk
minimization, and play a critical role in the analysis of
generalization bounds for unbounded loss functions
\cite{CortesMansourMohri2010}.

This binomial inequality is mentioned and used again without proof or
reference in \cite{AnthonyShawe-Taylor1993}, where the authors improve
the original work of \cite{Vapnik98} on relative deviation bounds by a
constant factor. The same claim later appears in \cite{Vapnik2006} and
implicitly in other publications referring to the relative deviations
bounds of Vapnik \cite{Vapnik98}.

To the best of our knowledge, there is no publication giving an actual
proof of this inequality in the machine learning literature.  Our
search efforts for a proof in the statistics literature were also
unsuccessful. Instead, some references suggest that such a
proof is indeed not available. In particular, we found one attempt to
prove this result in the context of the analysis of some
generalization bounds \cite{Jaeger2005}, but unfortunately the proof
is not sufficient to show the general case needed for the proof of
Vapnik \cite{Vapnik98}, and only pertains to cases where the number of
Bernoulli trials is `large enough'. No paper we have encountered contains a proof of the full theorem\footnote{After posting the preprint of our paper to
arxiv.org, we were contacted by the authors of \cite{Rigollet2011} who made
us aware that their paper \cite{Rigollet2011} contains a proof of a similar theorem. However,
their paper covers only the case where the bias $p$ of the binomial random variable satisfies
$p<\frac{1}{2}$, which is not
sufficiently general to cover the use of this theorem in the machine learning literature.}.
Our proof therefore seems to be the
first rigorous justification of this inequality in its full generality, which is needed for
the analysis of relative deviation bounds in machine learning.

In Section~\ref{sec:main}, we start with some preliminaries and then give the presentation of our main
result. In Section~\ref{sec:proofs}, we give a detailed proof of the inequality.

\begin{figure}[t]
\centering
\begin{tabular}{@{\hspace{-.2cm}}ccc}
\ig{.5}{figures/binomial_ex1} &
\ig{.5}{figures/binomial_ex2} &
\ig{.5}{figures/binomial_ex3}
\end{tabular}
\caption[]{Plots of the probability of getting different numbers of successes k, for the binomial distribution $B(m, p)$, shown for three
  different values of $m$, the number of trials, and $p$, the probability of a success on each trial. Note that in the second and third image, the distribution is clearly not symmetrical around its mean.}
\label{fig:binomialExamples}
\end{figure}

\section{Main result}
\label{sec:main}

The following is the standard definition of a binomial distribution.
\begin{definition}
  \emph{A random variable $X$ is said to be distributed according to the
  binomial distribution with parameters $m$ (the number of trials) and $p$ (the probability of success on each trial), if for $k = 0, 1, \ldots, m$ we have}
\begin{equation}
 \Pr[X = k] = \binom {m}{k} p^k (1 - p)^{m - k}.
\end{equation}
The binomial distribution with parameters $m$ and $p$ is denoted by
$B(m, p)$. It has mean $mp$ and variance $mp(1 - p)$.

\end{definition}

The following theorem is the main result of this paper.

\begin{theorem}
\label{th:main}
  For any positive integer $m$ and any probability p such that $p > \frac{1}{m}$, let $X$ be a random variable distributed according
  to $B(m, p)$.  Then, the following inequality holds:
\begin{equation}
\label{eq:main}
\Pr\big[X \geq  \E\left[X \right]\big] > \frac{1}{4}, 
\end{equation}
where $\E[X] = mp$ is the expected value of $X$. 
\end{theorem}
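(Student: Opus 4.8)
The plan is to exploit two structural features of the binomial: that $X$ is integer-valued, so $\Pr[X \ge \E[X]] = \Pr[X \ge \lceil mp \rceil]$, and that the upper tail $\Pr[X \ge k]$ is strictly increasing in $p$ for each fixed $m$ and $1 \le k \le m$. I would first dispose of the case $mp \in \Zset$: here the mean is a median of $B(m,p)$ (the classical fact that the median of a binomial equals its mean when $mp$ is an integer), so $\Pr[X \ge mp] \ge \tfrac12 > \tfrac14$. The remaining, and genuinely delicate, case is $mp \notin \Zset$.

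For $mp \notin \Zset$, set $k = \lceil mp \rceil$ and $\ell = k-1 = \lfloor mp \rfloor \ge 1$ (using $p > \tfrac1m$). As $p$ ranges over the open interval $(\tfrac{\ell}{m}, \tfrac{k}{m})$ we have $\lceil mp \rceil = k$ throughout, and from the incomplete-beta representation $\tfrac{d}{dp}\Pr[X \ge k] = \tfrac{m!}{(k-1)!(m-k)!} p^{k-1}(1-p)^{m-k} > 0$, so $\Pr[X \ge k]$ is strictly increasing on this interval. Hence its infimum is the left-endpoint limit, attained as $p \to (\ell/m)^+$, namely the value of $\Pr[X \ge \ell+1]$ at $p = \ell/m$, where the mean equals the integer $\ell$. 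This reduction is the heart of the argument: it turns the theorem into the single claim that $\Pr[X \ge \ell+1] \ge \tfrac14$ whenever $mp = \ell$ is a positive integer, and it simultaneously explains both the constant and its tightness, with the strict inequality $> \tfrac14$ surviving precisely because the interval is open.

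It then remains to prove the crux lemma: for integers $\ell \ge 1$ and $m \ge \ell+1$, with $p = \ell/m$, one has $\Pr[B(m,\ell/m) \ge \ell+1] \ge \tfrac14$. My plan here is to fix $\ell$ and show that this probability is nondecreasing in $m$, so that its minimum over $m \ge \ell+1$ occurs at $m = \ell+1$; there the event $\{X \ge \ell+1\}$ forces $X = \ell+1$, giving the explicit value $(\tfrac{\ell}{\ell+1})^{\ell+1}$, which is increasing in $\ell$ and equals exactly $\tfrac14$ at $\ell = 1$. The elementary inequality $(\ell/(\ell+1))^{\ell+1} \ge \tfrac14$ then finishes the proof, with equality only in the limiting regime $m = 2,\ p \to \tfrac12^+$.

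The main obstacle is the monotonicity in $m$. All the distributions $B(m,\ell/m)$ share the same integer mean $\ell$ but have variance $\ell(1-\ell/m)$ increasing toward the Poisson limit $\ell$, so intuitively spreading mass off the atom at the mean can only increase $\Pr[X > \ell]$; making this rigorous is the delicate step. I would attempt it either by a direct term-by-term comparison of $B(m,\ell/m)$ with $B(m+1,\ell/(m+1))$, or via a coupling that exhibits the effect of decreasing $p$ while adding a trial, or---failing a clean monotonicity proof---by bounding $\Pr[X=\ell]$ and $\Pr[X \ge \ell+1]$ directly through the ratios $P_{j+1}/P_j = \tfrac{(m-j)p}{(j+1)(1-p)}$ of successive terms $P_j = \Pr[X=j]$, which are completely explicit at $p = \ell/m$. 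Any of these routes requires careful control near the mode, where the mass can be as large as $\tfrac12$, so that the naive bound $\Pr[X \ge \ell+1] \ge \tfrac12 - \Pr[X=\ell]$ is by itself insufficient.
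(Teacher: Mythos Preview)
Your reduction step---showing that $p \mapsto \Pr[X \ge \lceil mp\rceil]$ is strictly increasing on each interval $(\ell/m,(\ell+1)/m]$, so that its infimum there is $\Pr_{X\sim B(m,\ell/m)}[X \ge \ell+1]$---is exactly the paper's Lemma~\ref{lemma:1} and Corollary~\ref{corollary:1}. The divergence is entirely in how you propose to establish the crux inequality $\Pr_{X\sim B(m,\ell/m)}[X \ge \ell+1] \ge \tfrac14$ for integers $1 \le \ell \le m-1$.

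The paper does \emph{not} attempt your monotonicity-in-$m$ reduction for general $\ell$. It splits: for $\ell = 1$ it proves directly (Lemma~\ref{lemma:4}) that $m \mapsto \Pr_{X\sim B(m,1/m)}[X \le 1] = (1-\tfrac1m)^m + (1-\tfrac1m)^{m-1}$ is nonincreasing, with value $\tfrac34$ at $m=2$---this is precisely your program, carried out in the one case where the sum has only two terms and the derivative in $m$ can be handled by a short Taylor expansion of $\log(1-\tfrac1m)$. For $\ell \ge 2$ the paper abandons that line and instead applies the Camp-Paulson normal approximation to the binomial c.d.f.\ (a nonlinear transform with explicit error at most $0.007/\sqrt{mp(1-p)}$), followed by several pages of calculus on the resulting parameters, ultimately obtaining $\Pr_{X\sim B(m,\ell/m)}[X \le \ell] \le 0.7152$ uniformly for $m \ge 2$, $\ell \ge 2$.

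Your proposal therefore has a genuine gap exactly where you flag it. The conjecture that $m \mapsto \Pr_{X\sim B(m,\ell/m)}[X \ge \ell+1]$ is nondecreasing for each fixed $\ell \ge 1$ is consistent with numerics and with the Poisson limit, and if proved would yield a much cleaner argument than the paper's. But none of your three suggested attacks is clearly workable: the term-by-term comparison involves $\ell+1$ terms with no evident telescoping; there is no obvious coupling between $B(m,\ell/m)$ and $B(m+1,\ell/(m+1))$, since with equal means but different variances neither stochastically dominates the other; and the direct-derivative route is already delicate at $\ell=1$ and has no clean closed form for larger $\ell$. That the paper implements your exact strategy for $\ell=1$ yet reaches for a heavy analytic tool for $\ell\ge 2$ is circumstantial evidence that the general monotonicity resists an elementary proof.
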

The lower bound is never reached but is approached asymptotically when
$m = 2$ as $p \to \frac{1}{2}$ from the right. Note that when $m=2$,
the case $p=\frac{1}{2}$ is excluded from consideration, due to our
assumption $p>\frac{1}{m}$. In words, the theorem says that a coin
that is flipped a fixed number of times always has a probability of
more than $ \frac{1}{4}$ of getting at least as many heads as the
expected value of the number of heads, as long as the coin's chance of
getting a head on each flip is not so low that the expected value is
less than or equal to 1. The inequality is tight, as illustrated by
Figure~\ref{fig:binomial}. In corollary~\ref{corollary:3} we prove a
bound on the probability of a binomial random variable being less than or equal to its expected value,
which is very similar to the bound here on such a random variable being greater than or equal to its expected value.

\begin{figure}[t]
\centering
\ig{.6}{figures/binomial_gr4_b}
\caption[]{This plot depicts $\Pr[X \geq \E[X]]$, the probability that
  a binomially distributed random variable $X$ exceeds its
  expectation, as a function of the trial success probability
  $p$. Each colored line corresponds to a different number of trials,
  $m = 2, 3, \ldots, 8$. Each colored line is dotted in the region
  where $p \le \frac{1}{m}$, and solid in the region that our proof
  pertains to, where $p >\frac{1}{m}$. The dashed horizontal line at
  $\frac{1}{4}$ represents the value of the lower bound. Our theorem
  is equivalent to saying that for all positive integers $m$ (not just
  the values of $m$ shown in the plot), the solid portions of the
  colored lines never cross below the dashed horizontal line. As can
  be seen from the figure, the lower bound is nearly met for many
  values of $m$.}
\label{fig:binomial}
\end{figure}

\section{Proof}
\label{sec:proofs}

Our proof of theorem~\ref{th:main} is based on the following series of lemmas and corollaries
and makes use of Camp-Paulson's normal approximation to the binomial
cumulative distribution function
\cite{JohnsonKotzBalakrishnan1995,JohnsonKempKotz2005,LeschJeske2009}.
We start with a lower bound that reduces the problem to a simpler one.

\begin{lemma}
\label{lemma:1}
  For all $k = 1, 2, \ldots , m - 1$ and $p\in
  (\frac{k}{m},\frac{k+1}{m}]$, the following inequality
  holds:
\begin{equation*}
\Pr_{X \sim B(m,p)} [X \geq \E[X]]\geq \Pr_{X \sim
    B(m,\frac{k}{m})}[X \geq  k+1]  .
\end{equation*}
\end{lemma}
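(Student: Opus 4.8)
The plan is to prove this by establishing two monotonicity facts about the binomial tail. Fix a positive integer $m$ and consider $p \in (\frac{k}{m}, \frac{k+1}{m}]$ for some $k \in \{1, \ldots, m-1\}$. The quantity $\E[X] = mp$ then satisfies $k < mp \le k+1$. Since $X$ is integer-valued, the event $\{X \ge mp\}$ is identical to the event $\{X \ge \lceil mp \rceil\}$, and because $k < mp \le k+1$ we have $\lceil mp \rceil = k+1$ (with the endpoint $p = \frac{k+1}{m}$ giving $mp = k+1$ exactly, so $\lceil mp \rceil = k+1$ there too). Hence the left-hand side equals $\Pr_{X \sim B(m,p)}[X \ge k+1]$, and the problem reduces to comparing this with the same threshold event under the smaller success probability $\frac{k}{m}$.

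The key step is then monotonicity of the tail probability in the parameter $p$. First I would record the standard fact that for any fixed integer threshold $j$ and fixed number of trials $m$, the function $p \mapsto \Pr_{X \sim B(m,p)}[X \ge j]$ is nondecreasing in $p$. Granting this, since $p > \frac{k}{m}$ we immediately get
\begin{equation*}
\Pr_{X \sim B(m,p)}[X \ge k+1] \ge \Pr_{X \sim B(m,\frac{k}{m})}[X \ge k+1],
\end{equation*}
which is exactly the desired inequality. So the whole lemma collapses to (i) the threshold identification $\lceil mp\rceil = k+1$ on the given interval, and (ii) this monotonicity statement.

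The main obstacle is supplying a clean justification of the monotonicity in $p$, since everything else is essentially bookkeeping. I would prove it by the standard coupling or calculus argument: differentiate $\Pr_{X \sim B(m,p)}[X \ge j] = \sum_{i=j}^{m} \binom{m}{i} p^i (1-p)^{m-i}$ with respect to $p$ and observe that the sum telescopes to the single nonnegative term $m \binom{m-1}{j-1} p^{j-1}(1-p)^{m-j} \ge 0$, so the tail is nondecreasing on $[0,1]$. Alternatively, a coupling argument realizes $B(m,p)$ and $B(m,p')$ with $p \ge p'$ on a common probability space so that the first stochastically dominates the second, making the tail inequality transparent. Either route is routine, so I expect the only real care needed is the boundary handling at $p = \frac{k+1}{m}$ and confirming that the ceiling identity holds at that endpoint — which it does, since $mp = k+1$ forces $\{X \ge mp\} = \{X \ge k+1\}$ with no gap.
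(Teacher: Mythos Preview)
Your proposal is correct and follows essentially the same approach as the paper: identify that on $(\tfrac{k}{m},\tfrac{k+1}{m}]$ the event $\{X\ge \E[X]\}$ coincides with $\{X\ge k+1\}$, then show the tail $\Pr_{X\sim B(m,p)}[X\ge k+1]$ is nondecreasing in $p$ by differentiating. The only cosmetic difference is that the paper bounds each summand of the derivative separately using $j\ge k\ge mp$ on the interval, whereas your telescoping (or coupling) argument yields the single term $m\binom{m-1}{k}p^{k}(1-p)^{m-k-1}\ge 0$ and hence monotonicity on all of $[0,1]$; either computation suffices.
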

\ignore{
 In other words, when the expected value of the number of heads for a
coin flipped $m$ times is in the range $(k,k+1]$, then the probability
that the number of actual heads is more than this expected value is
greater than or equal to the probability that a (different) coin with
bias $ \frac{k}{m}$ that is flipped $m$ times has $k+1$ or more heads.

\begin{figure}[t]
\centering
\ig{.6}{figures/binomial_gr6}
\end{figure}
}

\begin{proof*}
Let $X$ be a random variable distributed according to $B(m, p)$ and let
$F(m, p)$ denote $\Pr[X \geq \E[X]]$. Since $\E[X] = m p$, $F(m,p)$
can be written as the following sum:
\begin{equation*}
F(m,p) = \sum_{j = \lceil m p\rceil }^m \binom{m}{j}p^j(1 - p)^{m - j} .
\end{equation*}
We will consider the smallest value that $F(m,p)$ can take for $p\in
(\frac{1}{m},1]$ and $m$ a positive integer. Observe that if we
restrict $p$ to be in the half open interval $I_k = 
(\frac{k-1}{m},\frac{k}{m}]$, which represents a region
between the discontinuities of $F(m,p)$ which result from the factor $\lceil m p\rceil$, then we have $m p \in
(k-1,k]$ and so $\lceil m p\rceil = k$. Thus, we can write
\begin{equation*}
\forall p \in I_k , \hspace{1.7 mm} \forall k = 0, 1, \ldots , m - 1   \quad F(m, p) = \sum_{j = k}^m \binom{m}{j}p^j(1 - p)^{m - j}.    
\end{equation*}
The function $p \mapsto F(m, p)$ is differentiable for all $p\in
I_k$ and its differential is
\begin{equation*}
  \frac{\partial F(m,p)}{\partial p} = \sum_{j = k}^m
  \binom{m}{j}(1-p)^{m - j - 1}
  p^{j - 1} (j-m p).
\end{equation*}
Furthermore, for $p \in I_k$, we have $k \geq mp$, therefore $j \geq
mp$ (since in our sum $j \geq k$), and so $\frac{\partial
  F(m,p)}{\partial p} \geq 0$. The inequality is in fact strict when
$p \neq 0$ and $p \neq 1$ since the sum must have at least two terms
and at least one of these terms must be positive. Thus, the function
$p \mapsto F(m, p)$ is strictly increasing within each $I_k$. In view
of that, the value of $F(m, p)$ for $p \in I_{k+1}$ is lower bounded
by $\lim_{p\to \left(\frac{k}{m}\right)^+} F(m,p)$, which is given by
\begin{equation*}
\lim_{p\to \left(\frac{k}{m}\right)^+}   F(m,p) 
= \sum_{j=k+1}^m
\binom{m}{j}\left(\frac{k}{m}\right)^j\left(1-\frac{k}{m}\right)^{m-j} 
= \Pr_{X \sim B(m,\frac{k}{m})}[X \geq k+1].
\end{equation*}
Therefore, for $k=1, 2, \ldots, m-1$, whenever $p \in (\frac{k}{m},\frac{k+1}{m}]$ we have
\begin{equation*}
F(m,p) \geq  \Pr_{X \sim B(m,\frac{k}{m})}[X \geq k+1]. \dqed
\end{equation*}
\end{proof*}

\begin{corollary}
\label{corollary:1}
For all $p \in (\frac{1}{m}, 1)$, the following inequality holds:
\begin{equation*}
\Pr_{X \sim B(m,p)}[X \geq  \E[X]] \geq  1 - \max_{k \in
  \set{1, \ldots , m - 1}}\Pr_{X \sim B(m,\frac{k}{m})}[X \leq k]      .
\end{equation*}
\end{corollary}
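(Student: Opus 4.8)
The plan is to derive this directly from Lemma~\ref{lemma:1} by a covering argument combined with a complementation of the binomial tail. First I would observe that the half-open intervals $(\frac{k}{m}, \frac{k+1}{m}]$ for $k = 1, 2, \ldots, m-1$ are pairwise disjoint and their union is exactly $(\frac{1}{m}, 1]$, which contains the range $(\frac{1}{m}, 1)$ of interest. Hence every $p \in (\frac{1}{m}, 1)$ lies in precisely one such interval, and I can fix the corresponding index $k \in \set{1, \ldots, m-1}$.

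For that $p$ and $k$, Lemma~\ref{lemma:1} gives $\Pr_{X \sim B(m,p)}[X \geq \E[X]] \geq \Pr_{X \sim B(m, \frac{k}{m})}[X \geq k+1]$. The next step is to rewrite the right-hand side by complementation: for $Y \sim B(m, \frac{k}{m})$ the events $\set{Y \geq k+1}$ and $\set{Y \leq k}$ are complementary, so $\Pr[Y \geq k+1] = 1 - \Pr[Y \leq k]$. This turns the lower bound into $1 - \Pr_{X \sim B(m, \frac{k}{m})}[X \leq k]$.

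Finally, since $k \in \set{1, \ldots, m-1}$, the term $\Pr_{X \sim B(m, \frac{k}{m})}[X \leq k]$ is at most the maximum of these quantities over all indices in that range, and replacing the subtracted term by this larger maximum only weakens the lower bound. This yields the claimed inequality uniformly in $p$, since the resulting right-hand side no longer depends on which interval $p$ fell into.

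I do not anticipate a genuine obstacle here: the statement is essentially an immediate repackaging of Lemma~\ref{lemma:1}. The only points requiring a little care are verifying that the intervals indexed by $k = 1, \ldots, m-1$ really exhaust $(\frac{1}{m}, 1)$ (so that neither the single endpoint $p = 1$ nor the excluded region $p \leq \frac{1}{m}$ causes trouble), and checking that the complementation is exact, which it is because a binomial variable takes only integer values in $\set{0, 1, \ldots, m}$.
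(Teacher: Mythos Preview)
Your proposal is correct and follows essentially the same approach as the paper: apply Lemma~\ref{lemma:1} to pass from $p$ to the appropriate $k$, complement the tail event, and then replace the resulting term by the maximum over $k$. The paper phrases the first step as taking a minimum over $k$ rather than explicitly locating $p$ in one of the intervals, but the content is identical.
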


\begin{proof}
By Lemma~\ref{lemma:1}, the following inequality holds
\begin{equation*}
 \Pr_{X \sim B(m,p)}[X \geq  \E[X]] 
 \geq \min_{k \in \set{1,..,m - 1}} \Pr_{X \sim
  B(m,\frac{k}{m})}[X \geq k + 1].
\end{equation*}
The right-hand side is equivalent to
\begin{equation*}
\min_{k \in \set{1,..,m - 1}} 1 - \Pr_{X \sim
  B(m,\frac{k}{m})}[X \leq  k] 
= 1 - \max_{k\in \set{1, \ldots , m -
    1}}\Pr_{X \sim B(m,\frac{k}{m})}[X \leq  k]
\end{equation*}
which concludes the proof.
\end{proof}

In view of Corollary~\ref{corollary:1}, in order to prove our main result it
suffices that we upper bound the expression
\begin{equation*}
\Pr_{X \sim B(m,\frac{k}{m})}[X \leq  k] 
= \sum_{j=0}^k \binom{m}{j}\left(\frac{k}{m}\right)^j\left(1-\frac{k}{m}\right)^{m-j}
\end{equation*}
by $\frac{3}{4}$ for all integers $m \geq 2$ and $1 \leq k \leq m -
1$. Note that the case $m = 1$ is irrelevant since the inequality $p >
\frac{1}{m}$ assumed for our main result cannot hold in that case, due
to $p$ being a probability.  The case $k = 0$ can also be ignored
since it corresponds to $p \le \frac{1}{m}$.  Finally, the case $k =
m$ is irrelevant, since it corresponds to $p > 1$. We note,
furthermore, that when $p = 1$ that immediately gives $\Pr_{X \sim
  B(m,p)}[X \geq \E[X]] = 1 \geq \frac{1}{4}$.

Now, we introduce some lemmas which will be used to prove our main
result.

\begin{lemma}
\label{lemma:2}
The following inequality holds for all  $k =1, 2, \ldots , m - 1$:
\begin{equation*}
\Pr_{X \sim B(m,\frac{k}{m})}[X \leq  k] \leq  \Phi \left[\frac{\beta_k \theta  +\frac{1}{3}\gamma_{m,k}}{\sqrt{ \beta_k+\gamma_{m,k}}}\right]+
\frac{0.007}{\sqrt{ 1-\frac{1}{m} }} ,
\end{equation*}
where $\Phi \colon x \mapsto \int_{-\infty }^x\frac{1}{\sqrt{2\pi }}
e^{\frac{-s^2}{2}} ds$ is the cumulative distribution function for the 
standard normal distribution and $\beta_k$, $\gamma_{m,k}$, and $\theta$
are defined as follows:
\begin{equation*}
\beta_k = \frac{1}{1+k}\left(1+\frac{1}{k}\right)^{2/3}\mspace{-25mu}, \quad 
\gamma_{m,k} = \frac{1}{m-k}, \quad
\theta = \frac{17}{3\ 2^{1/3}}-3\ 2^{1/3} \approx
 0.71787.
\end{equation*}
\end{lemma}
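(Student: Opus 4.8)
The plan is to apply the Camp-Paulson normal approximation to the binomial CDF together with its uniform error bound, to specialize every parameter to the case $p=\frac{k}{m}$, and then to reduce the assertion to a one-variable inequality that I will close by a concavity argument. Concretely, for $X\sim B(m,p)$ the Camp-Paulson approximation to the lower tail $\Pr[X\le c]$ is $\Phi\big(((1-\mu_2)r^{1/3}-(1-\mu_1))/\sqrt{\mu_2 r^{2/3}+\mu_1}\big)$, where $r=\frac{(c+1)(1-p)}{(m-c)p}$, $\mu_1=\frac{1}{9(m-c)}$ and $\mu_2=\frac{1}{9(c+1)}$. I would pair this with the absolute error bound supplied by the cited references, taken in the form $\big|\Pr[X\le c]-\Phi(\cdot)\big|\le \frac{0.007}{\sqrt{mp(1-p)}}$, so that $\Phi(\cdot)$ plus the error term is an honest upper bound on $\Pr[X\le c]$.

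Setting $c=k$ and $p=\frac{k}{m}$, a direct computation collapses the parameters: $r=1+\frac1k$, $\mu_1=\frac19\gamma_{m,k}$, and $\mu_2 r^{2/3}=\frac19\beta_k$, whence the denominator becomes $\sqrt{\mu_2 r^{2/3}+\mu_1}=\frac13\sqrt{\beta_k+\gamma_{m,k}}$, exactly the radical appearing in the statement up to the factor $\frac13$. For the error term I would use $mp(1-p)=\frac{k(m-k)}{m}$ and note that over $k\in\set{1,\dots,m-1}$ this quantity is minimized at the endpoints $k=1$ and $k=m-1$, where it equals $1-\frac1m$; hence $\frac{0.007}{\sqrt{mp(1-p)}}\le \frac{0.007}{\sqrt{1-1/m}}$, which is the second term of the bound.

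It then remains to show that the argument of $\Phi$ produced by Camp-Paulson does not exceed the argument in the statement. Since the two share the positive denominator $\frac13\sqrt{\beta_k+\gamma_{m,k}}$ and $\Phi$ is increasing, the comparison reduces to an inequality between numerators; the $\gamma_{m,k}$ contributions match (the term $\mu_1=\frac19\gamma_{m,k}$ equals the $\frac13\gamma_{m,k}$ term of the statement once the common factor $\frac13$ is extracted), leaving the purely $k$-dependent inequality $(1-\mu_2)r^{1/3}-1\le \frac{\theta}{3}\beta_k$. Substituting $t=(1+\frac1k)^{1/3}\in(1,2^{1/3}]$ and using $\beta_k=t^2-t^{-1}$ turns this into $\frac89 t+\frac19 t^{-2}-1\le \frac{\theta}{3}(t^2-t^{-1})$, i.e. $g(t)\ge 0$ for $g(t):=\frac{\theta}{3}(t^2-t^{-1})-\frac89 t-\frac19 t^{-2}+1$.

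I expect this last inequality to be the crux. The value of $\theta$ is evidently engineered so that $g(2^{1/3})=0$ (the case $k=1$), while $g(1)=0$ as well (the limit $k\to\infty$), so the bound is tight at both ends and no slack is available. My plan to finish is to prove that $g$ is concave on $[1,2^{1/3}]$: a short calculation gives $g''(t)=\frac23\big(\theta(1-t^{-3})-t^{-4}\big)$, so $g''(t)\le 0$ is equivalent to $\theta(t^4-t)\le 1$, and since the increasing function $t^4-t$ is maximized at $t=2^{1/3}$ with value $2^{1/3}$, we obtain $\theta(t^4-t)\le \theta\,2^{1/3}<1$ throughout the interval. A concave function vanishing at both endpoints of an interval is nonnegative on it, which yields $g\ge 0$ and closes the argument. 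The main obstacle is thus the sign analysis of $g$; once concavity is secured the remainder is bookkeeping, but establishing concavity hinges on the numerical fact $\theta\,2^{1/3}<1$, which is precisely what the specific constant $\theta\approx 0.71787$ delivers.
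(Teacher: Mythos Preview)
Your proposal is correct and follows the same architecture as the paper's proof: apply the Camp--Paulson approximation with the uniform error bound $0.007/\sqrt{mp(1-p)}$, specialize to $p=k/m$ and $j=k$, bound the error term via $k(1-k/m)\ge 1-1/m$, and reduce the comparison of $\Phi$-arguments to the one-variable inequality $\alpha_k\le\theta\beta_k$ in the variable $t=(1+1/k)^{1/3}\in(1,2^{1/3}]$. The only difference is in how that last inequality is closed. The paper writes the ratio as $\alpha_k/\beta_k=\dfrac{3\lambda}{1+\lambda+\lambda^2}-\dfrac{1}{3\lambda}$, differentiates, expands the numerator in powers of $\lambda-1$, and checks positivity to conclude that the ratio is increasing and attains its maximum $\theta$ at $\lambda=2^{1/3}$. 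You instead verify directly that $g(1)=g(2^{1/3})=0$ and that $g''\le 0$ reduces to $\theta(t^4-t)\le 1$, which holds since $t^4-t\le 2^{1/3}$ and $\theta\cdot 2^{1/3}\approx 0.904<1$; concavity plus vanishing endpoints then gives $g\ge 0$. Your concavity argument is a bit tidier; the paper's monotonicity argument has the side benefit of exhibiting $\theta$ as exactly $\sup_k \alpha_k/\beta_k$, which explains where the constant comes from.
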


\ignore{
\begin{figure}[t]
\centering
\ig{.4}{figures/binomial_gr7}
\ig{.4}{figures/binomial_gr8}
\ig{.4}{figures/binomial_gr9}
\caption[]{}
\end{figure}
}

\ignore{
\begin{figure}[t]
\centering
\begin{tabular}{ccc}
\ig{.3}{figures/binomial_gr10} & 
\ig{.3}{figures/binomial_gr12} &
\ig{.3}{figures/binomial_gr13}\\
 (a) & (b) & (c)
\end{tabular}
\caption[]{Plots of (a) $\alpha_k$ and $\beta_k$; (b)
  $\alpha_k/\beta_k$; and (c) $\alpha_k/(\theta \beta_k)$ as a function of
  $k$.}
\label{fig:alpha}
\end{figure}

\begin{figure}[t]
\centering
\ig{.6}{figures/binomial_gr11}
\caption[]{Plot of $\e$ as a function of $k$.}
\label{fig:epsilon}
\end{figure}
}

\begin{figure}[t]
\centering
\ig{.6}{figures/strong_bound}
\caption[]{For m=2, 22, $\ldots$, 72 and $1 \le k \le m-1$, this plot depicts the values of $\Pr_{X \sim B(m,\frac{k}{m})}[X \leq  k]$ as colored dots (with one color per choice of m), against the values of the bound from Lemma~\ref{lemma:2}, which are shown as short horizontal lines of matching color. The upper bound that we need to demonstrate, $\frac{3}{4}$, is shown as a blue horizontal line.}
\label{fig:binomial2}
\end{figure}

\begin{proof}
  Our proof makes use of Camp-Paulson's normal approximation to the
  binomial cumulative distribution function
  \cite{JohnsonKotzBalakrishnan1995,JohnsonKempKotz2005,LeschJeske2009},
  which helps us reformulate the bound sought in terms of the normal
  distribution.  The Camp-Paulson approximation improves on the
  classical normal approximation by using a non-linear transformation. This is useful for modeling the asymmetry that can occur in the binomial distribution. The Camp-Paulson bound can be stated as follows
  \cite{JohnsonKotzBalakrishnan1995,JohnsonKempKotz2005}:
\begin{equation*}
\left| \Pr_{X \sim B(m,p)}[X \leq j]- \Phi \Big[\frac{c-\mu }{\sigma }\Big] \right| \leq \frac{0.007}{\sqrt{m p (1-p)}}
\end{equation*}
where 
\begin{align*}
& c = (1-b)r^{1/3}, \quad \mu  = 1- a, \quad \sigma  = \sqrt{b r^{2/3}+a},\\
& a = \frac{1}{9 m - 9 j}, \quad b = \frac{1}{9j+9}, \quad r = \frac{(j+1)(1-p)}{m p - j p}.
\end{align*}
Plugging in the definitions for all of these variables yields
\begin{equation*}
\Phi \left[\frac{c-\mu }{\sigma }\right] = \Phi \left[\frac{\left(1-\frac{1}{9}\frac{1}{j+1}\right)\left(\frac{1}{p}\frac{(j+1)(1-p)}{m - j}\right)^{1/3}
+ \frac{1}{9}\frac{1}{m - j} -1}{\sqrt{\frac{1}{9}\frac{1}{j+1} \left(\frac{1}{p}\frac{(j+1)(1-p)}{m - j}\right)^{2/3}+\frac{1}{9}\frac{1}{m - j}}}\right].
\end{equation*}
Applying this bound to the case of interest for us where $p =
\frac{k}{m}$ and $j = k$, yields
\begin{equation*}
\frac{c-\mu }{\sigma }  = \frac{\alpha_k+\frac{1}{3}\gamma_{m,k}}{ \sqrt{\beta_k+\gamma_{m,k}}},
\end{equation*}
with $\alpha_k = \left(1+\frac{1}{k}\right)^{1/3}\left(3
  -\frac{1}{3}\frac{1}{1+k}\right)-3$, and with $\beta_k =   \frac{1}{1 + k}(1+\frac{1}{k})^{2/3}$. Thus, we can write 
\begin{equation}
\label{eq:cp}
\Pr_{X \sim B(m,\frac{k}{m})}[X \leq k]  \leq   \Phi \left[\frac{\alpha_k+\frac{1}{3}\gamma_{m,k}}{ \sqrt{\beta_k+\gamma
_{m,k}}}\right]  + \frac{0.007}{\sqrt{k \left(1-\frac{k}{m}\right)}}.
\end{equation}
To simplify this expression, we will first upper bound $\alpha_k$ in
terms of $\beta_k$. To do so, we consider the ratio
\begin{equation*}
\frac{\alpha_k}{\beta_k} 
= \frac{(1+\frac{1}{k})^{1/3}(3
    -\frac{1}{3}\frac{1}{1+k})-3}{
  \frac{1}{1 + k}(1+\frac{1}{k})^{2/3}}
= \frac{3 [(1 + \frac{1}{k})^{1/3} - 1]}{\frac{1}{1+k}(1+\frac{1}{k})^{2/3}} - \frac{1}{3 (1+\frac{1}{k})^{1/3}}.
\end{equation*}
Let $\lambda = (1+\frac{1}{k})^{1/3}$, which we can rearrange to write $\frac{1}{1 +
  k} = \frac{\lambda^3 - 1}{\lambda^3}$, with $\lambda \in (1,
2^{1/3}]$. Then, the ratio can be rewritten as follows:
\begin{equation*}
\frac{\alpha_k}{\beta_k} 
= \frac{3 \lambda^3 [(\lambda - 1]}{(\lambda^3 - 1) \lambda^2} -
\frac{1}{3 \lambda} = \frac{3 \lambda
}{1+\lambda +\lambda ^2} - \frac{1}{3 \lambda }.
\end{equation*}
The expression is differentiable and its differential is given by
\begin{align*}
 \frac{d}{d\lambda }\frac{\alpha_k}{\beta_k} 
& = 
\frac{3(1 + \lambda + \lambda^2) - 3 \lambda (2 \lambda +
  1)}{(1 + \lambda + \lambda^2)^2} + \frac{1}{3 \lambda^2} \\
& = \frac{(1 - \lambda^2)}{(1 + \lambda + \lambda^2)^2} + \frac{1}{3
  \lambda^2}
= \frac{-8 (\lambda -1)^4 -30 (\lambda -1)^3 -30 (\lambda - 1)^2 + 9}
{3 \lambda ^2 \left(1+\lambda +\lambda ^2\right)^2}.
\end{align*}
For $\lambda \in (1, 2^{\frac{1}{3}}]$, $\lambda - 1 \leq
2^{\frac{1}{3}} - 1$ $\le 0.26$, thus, the following inequality holds:
\begin{equation*}
8 (\lambda -1)^4 + 30 (\lambda -1)^3 + 30 (\lambda - 1)^2 \leq 
8 (2^{\frac{1}{3}} -1)^4 + 30 (2^{\frac{1}{3}} -1)^3 + 30
(2^{\frac{1}{3}} - 1)^2 \approx  2.59
< 9.
\end{equation*}
Thus, the derivative is positive, so $\frac{\alpha_k}{\beta_k}$ is an increasing function of
$\lambda$ on the interval $(1, 2^{\frac{1}{3}}]$ and its maximum is 
reached for $\lambda = 2^{\frac{1}{3}}$. For that choice of $\lambda$,
the ratio can be written
\begin{equation*}
\frac{3 \ 2^{\frac{1}{3}}
}{1+ 2^{\frac{1}{3}} + 2^{\frac{2}{3}}} - \frac{1}{3 \
  2^{\frac{1}{3}} } =  \frac{17}{3\ 2^{\frac{1}{3}}} - 3\
2^{\frac{1}{3}} = \theta \approx  0.717874,
\end{equation*}
which upper bounds $\frac{\alpha_k}{\beta_k}$.
Since $\Phi [x]$ is a strictly increasing function, using \\$\alpha_k
\leq \theta \beta_k$ yields
\begin{equation}
\label{eq:theta}
\Phi \left[\frac{\alpha_k+\frac{1}{3}\gamma_{m,k}}{ \sqrt{\beta_k+\gamma_{m,k}}}\right] \leq  \Phi \left[\frac{\beta_k \theta  +\frac{1}{3}\gamma
_{m,k}}{\sqrt{ \beta_k + \gamma_{m,k}}}\right].
\end{equation}

We now bound the term $\frac{0.007}{\sqrt{k (1-\frac{k}{m})}}$. The
quadratic function $k \mapsto k (1 - \frac{k}{m})$ for $k = 1, 2,
\ldots, m - 1$, achieves its minimum at $k=1$, giving $k (1 -
\frac{k}{m}) \geq (1 - \frac{1}{m})$. Thus, in view of \eqref{eq:cp}
and \eqref{eq:theta}, we can write
\begin{equation*}
\Pr\left[B(m,\frac{k}{m})\leq k\right]  \leq   \Phi \left[\frac{\beta_k \theta  +\frac{1}{3}\gamma_{m,k}}{ \sqrt{\beta_k+\gamma
_{m,k}}}\right]  + \frac{0.007}{\sqrt{1 - \frac{1}{m}}}.
\end{equation*}
This concludes the proof.\qed
\end{proof}

\ignore{
\begin{figure}[t]
\centering
\ig{.6}{figures/binomial_gr14}
\ig{.6}{figures/binomial_gr15}
\ig{.6}{figures/binomial_gr16}
\caption[]{}
\end{figure}
}

\begin{lemma}
\label{lemma:3} 
Let $\beta_k = \frac{1}{1 + k}\left(1+\frac{1}{k}\right)^{2/3}$ and $\gamma_{m,k} = \frac{1}{m - k}$ for
$m > 1$ and \\$k =1, 2, \ldots, m - 1$. Then, the following inequality
holds for $\theta =\frac{17}{3\ 2^{\frac{1}{3}}}-3\ 2^{\frac{1}{3}} $:
\begin{equation*}
\Phi \left[\frac{\beta_k \theta  +\frac{1}{3}\gamma_{m,k}}{\sqrt{ \beta_k+\gamma_{m,k}}}\right]\leq \Phi \left[\frac{\beta_k \theta  +\frac{1}{3}}{\sqrt{
\beta_k+1}}\right].
\end{equation*}
\end{lemma}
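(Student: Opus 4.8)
The plan is to use the strict monotonicity of $\Phi$ to strip it off both sides, reducing the claim to the inequality between arguments
\begin{equation*}
\frac{\beta_k \theta + \frac{1}{3}\gamma_{m,k}}{\sqrt{\beta_k + \gamma_{m,k}}} \le \frac{\beta_k \theta + \frac{1}{3}}{\sqrt{\beta_k + 1}}.
\end{equation*}
Holding $\beta = \beta_k$ fixed and setting $g(\gamma) = \big(\beta\theta + \tfrac{1}{3}\gamma\big)/\sqrt{\beta + \gamma}$, I observe that $\gamma_{m,k} = \frac{1}{m-k}$ always lies in $(0,1]$ and that the right-hand side is exactly $g(1)$; thus it suffices to prove $g(\gamma) \le g(1)$ for every $\gamma \in (0,1]$.

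First I would differentiate $g$ with respect to $\gamma$. A short computation shows that the sign of $g'(\gamma)$ matches the sign of $\tfrac{1}{6}\gamma - \beta\big(\tfrac{\theta}{2} - \tfrac{1}{3}\big)$. Since $\tfrac{\theta}{2} - \tfrac{1}{3} > 0$, the function $g$ is \emph{not} monotone: it decreases and then increases, with a single interior minimum, so its maximum over $(0,1]$ is attained at one of the endpoints $\lim_{\gamma \to 0^+} g(\gamma) = \theta\sqrt{\beta}$ or $g(1)$. Hence the whole lemma reduces to the single comparison $\theta\sqrt{\beta} \le g(1)$, which after multiplying through by $\sqrt{\beta+1} > 0$ reads $\theta\sqrt{\beta(\beta+1)} \le \theta\beta + \tfrac{1}{3}$.

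The crux—and the place where both the exact value of $\theta$ and the admissible range of $\beta_k$ matter—is this last inequality, because it \emph{fails} for large $\beta$: as $\beta \to \infty$ the gap $\theta\big(\sqrt{\beta^2+\beta}-\beta\big)$ tends to $\theta/2 > \tfrac{1}{3}$. To control $\beta$ I would first rewrite $\beta_k = \frac{1}{1+k}\big(1+\tfrac{1}{k}\big)^{2/3} = \frac{1}{(k+1)^{1/3} k^{2/3}}$, a form that is visibly decreasing in $k$ and therefore bounded above by $\beta_1 = 2^{-1/3}$. Since $\beta \mapsto \theta\big(\sqrt{\beta^2+\beta}-\beta\big)$ is increasing (its derivative is positive by an elementary check), it is maximized over $(0, 2^{-1/3}]$ at $\beta = 2^{-1/3}$, where a direct evaluation gives a value strictly below $\tfrac{1}{3}$. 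This confirms $\theta\sqrt{\beta} \le g(1)$ and closes the argument. An equivalent, perhaps cleaner, route is to clear denominators at the outset and show that $G(\gamma) = \big(\beta\theta+\tfrac{1}{3}\big)^2(\beta+\gamma) - \big(\beta\theta+\tfrac{1}{3}\gamma\big)^2(\beta+1)$ is nonnegative on $[0,1]$: it is a downward parabola in $\gamma$ with $G(1) = 0$, so concavity gives $G(\gamma) \ge (1-\gamma)G(0)$, leaving only the single inequality $G(0) \ge 0$ to check, again using $\beta_k \le 2^{-1/3}$.
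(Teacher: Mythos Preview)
Your proof is correct and follows essentially the same route as the paper: reduce via monotonicity of $\Phi$ to the inequality of arguments, study $g(\gamma)=\frac{\beta\theta+\frac{1}{3}\gamma}{\sqrt{\beta+\gamma}}$ on $[0,1]$, observe from the sign of $g'$ that it is U--shaped so its maximum is at an endpoint, and then verify $g(0)=\theta\sqrt{\beta}\le g(1)$ using $\beta_k\le\beta_1=2^{-1/3}$. The only cosmetic difference is that the paper squares the endpoint inequality to obtain the explicit threshold $\beta_k\le \frac{1}{\theta(9\theta-6)}\approx 3.022$, whereas you check the equivalent $\theta(\sqrt{\beta^2+\beta}-\beta)\le \tfrac{1}{3}$ by monotonicity and numerical evaluation at $\beta_1$; your alternative concave--parabola argument with $G(\gamma)$ is also sound and reduces to the same endpoint check.
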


\ignore{
\begin{figure}[t]
\centering
\ig{.6}{figures/binomial_gr17}
\ig{.6}{figures/binomial_gr18}
\end{figure}
}

\begin{proof}
  Since for $k = 1, 2, \ldots, m - 1$, we have $\frac{1}{m - 1}
  \leq \gamma_{m,k} \leq 1$, the following inequality holds:
\begin{equation*}
\frac{\beta_k \theta  + \frac{1}{3}\gamma_{m,k}}{\sqrt{
    \beta_k+\gamma_{m,k}}} 
\leq  \max_{\gamma \in [0,1]} \frac{\beta_k \theta  +\frac{1}{3}\gamma
}{\sqrt{ \beta_k+\gamma }}.
\end{equation*}
Since $\beta_k = \frac{1}{1+k}\left(1+\frac{1}{k}\right)^{2/3} > 0$,
the function $\phi\colon \gamma \mapsto \frac{\beta_k \theta
  +\frac{1}{3}\gamma }{\sqrt{ \beta_k+\gamma }} $ is continuously
differentiable for $\gamma \in [0,1]$. Its derivative is given by
$\phi'(\gamma) = \frac{\gamma +\beta_k (2 - 3 \theta )}{6
  \left(\beta_k+\gamma \right){}^{3/2}}$. Since \\$2 - 3 \theta
\approx -0.1536 < 0$, $\phi'(\gamma)$ is non-negative if and only if
$\gamma \geq \beta_k (3 \theta -2)$. Thus, $\phi(\gamma)$ is
decreasing for $\gamma < \beta_k (3 \theta -2)$ and increasing for
values of $\gamma$ larger than that threshold. That implies that the
shape of the graph of $\phi(\gamma)$ is such that the function's value
is maximized at the end points.  So $\max_{\gamma \in [0,1]}
\phi(\gamma) = $ $\max(\phi(0), \phi(1)) = \max(\sqrt{\beta_k} \theta,
\frac{\beta_k \theta + \frac{1}{3}}{\sqrt{\beta_k + 1}})$. The
inequality $\sqrt{\beta_k} \theta \leq \frac{\beta_k \theta +
  \frac{1}{3}}{\sqrt{\beta_k + 1}}$ holds if and only if $\beta_k
(\beta_k + 1) \theta^2 \leq (\beta_k \theta + \frac{1}{3})^2$, that is
if $\beta_k \leq \frac{1}{\theta (9 \theta - 6)} \approx 3.022$. But
since $\beta_k$ is a decreasing function of $k$, it has $\beta_1
\approx 0.7937$ as its upper bound, and so this necessary requirement
always holds. That means that the maximum value of $\phi(\gamma)$ for
$\gamma \in [0,1]$ occurs at $\gamma=1$, yielding the upper bound
$\frac{\beta_k \theta + \frac{1}{3}}{\sqrt{\beta_k + 1}}$, which
concludes the proof.
\end{proof}

\ignore{
\begin{figure}[t]
\centering
\ig{.6}{figures/binomial_gr19}
\ig{.6}{figures/binomial_gr20}
\ig{.6}{figures/binomial_gr21}
\end{figure}

\begin{figure}[t]
\centering
\ig{.6}{figures/binomial_gr22}
\end{figure}

\begin{figure}[t]
\centering
\ig{.6}{figures/binomial_gr23}
\ig{.6}{figures/binomial_gr24}
\ig{.6}{figures/binomial_gr25}
\end{figure}
}

\begin{corollary}
\label{corollary:2}
The following inequality holds for all $m \geq 2$ and $k \geq  2$:
\begin{equation*}
\Pr_{X \sim B(m,\frac{k}{m})}[X \leq  k] \leq 0.7152.
\end{equation*}
\end{corollary}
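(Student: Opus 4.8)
The plan is to chain Lemma~\ref{lemma:2} and Lemma~\ref{lemma:3} to collapse the binomial tail probability into a single univariate quantity in $\beta_k$, and then to maximize that quantity over the admissible range of $k$. Composing the two lemmas gives, for all $m \geq 2$ and $k = 1, \ldots, m-1$,
\begin{equation*}
\Pr_{X \sim B(m,\frac{k}{m})}[X \leq k] \leq \Phi\left[\frac{\beta_k \theta + \frac{1}{3}}{\sqrt{\beta_k + 1}}\right] + \frac{0.007}{\sqrt{1 - \frac{1}{m}}},
\end{equation*}
so it suffices to bound the two terms on the right separately. The error term is immediate: $m \geq 2$ forces $1 - \frac{1}{m} \geq \frac{1}{2}$, hence $\frac{0.007}{\sqrt{1 - \frac{1}{m}}} \leq 0.007\sqrt{2} < 0.0099$.

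For the main term, I would show that it is maximized at $k = 2$. Writing $g(\beta) = \frac{\beta\theta + \frac{1}{3}}{\sqrt{\beta + 1}}$, a short computation gives $g'(\beta) = \frac{\frac{1}{2}\theta\beta + (\theta - \frac{1}{6})}{(\beta + 1)^{3/2}}$, whose numerator is positive for $\beta > 0$ since $\theta \approx 0.718 > \frac{1}{6}$; thus $g$ is increasing. As $\Phi$ is increasing and $\beta_k$ is decreasing in $k$ (noted in the proof of Lemma~\ref{lemma:3}), the composite $\Phi[g(\beta_k)]$ is maximized over $k \geq 2$ at $k = 2$, where $\beta_2 = \frac{1}{3}\left(\frac{3}{2}\right)^{2/3} \approx 0.4368$. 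Evaluating there yields $g(\beta_2) \approx 0.5397$ and $\Phi(0.5397) \approx 0.7053$, so the main term is at most $0.7053$. Adding the error gives $0.7053 + 0.0099 = 0.7152$, the claimed bound.

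The argument is mostly bookkeeping, but two points deserve care. First, the monotonicity of $g$ rests precisely on $\theta > \frac{1}{6}$, so this should be verified explicitly rather than taken for granted. Second, the constant $0.7152$ is essentially tight, being exactly $\Phi(g(\beta_2))$ plus the worst-case error term at $m = 2$; the numerical evaluation of $\Phi(g(\beta_2))$ therefore leaves almost no slack and must be carried out accurately. I would also stress why the hypothesis $k \geq 2$ is indispensable: at $k = 1$ the larger value $\beta_1 \approx 0.794$ gives $\Phi[g(\beta_1)] \approx 0.75$, which already exceeds the target, so the case $k = 1$ cannot be absorbed here and must be treated on its own.
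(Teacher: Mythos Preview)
Your proof is correct and follows essentially the same approach as the paper: combine Lemmas~\ref{lemma:2} and~\ref{lemma:3}, show $g(\beta)=\frac{\beta\theta+\frac{1}{3}}{\sqrt{\beta+1}}$ is increasing (your derivative $\frac{\frac{1}{2}\theta\beta+(\theta-\frac{1}{6})}{(\beta+1)^{3/2}}$ is algebraically the same as the paper's $\frac{3(\beta+2)\theta-1}{6(\beta+1)^{3/2}}$), use the decrease of $\beta_k$ to reduce to $k=2$, and bound the error term at $m=2$. Your added commentary on the role of the hypothesis $k\geq 2$ and the tightness of the constant is accurate and useful, though not in the paper's own proof.
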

\begin{proof}
By Lemmas~\ref{lemma:2}-\ref{lemma:3}, we can write
\begin{equation*}
\Pr_{X \sim B(m,\frac{k}{m})}[X \leq  k] \leq  \Phi \left[\frac{\beta_k \theta  +\frac{1}{3}}{\sqrt{ \beta_k+1}}\right]+ \frac{0.007}{\sqrt{
1-\frac{1}{m} }} .
\end{equation*}
Furthermore $\beta_k = \frac{1}{1+k}\left(1+\frac{1}{k}\right)^{2/3}$
is a decreasing function of $k$. Therefore, for $k \geq 2$, it must
always be within the range $\beta_k\in \left[\lim_{k\to \infty }
  \beta_k, \beta_2\right]$ = $\left[0,\frac{1}{2^{2/3}
    3^{1/3}}\right]\approx [0 ,0.43679] $, which implies
\begin{equation*}
\frac{\beta_k \theta  +\frac{1}{3}}{\sqrt{ \beta_k+1}} 
\leq  \max_{k \geq 2} \frac{\beta_k \theta  +\frac{1}{3}}{\sqrt{
\beta_k+1}} 
\leq   \max_{\beta \in [0, \beta_2 ]} \frac{\beta  \theta  +
  \frac{1}{3}}{\sqrt{\beta + 1}}.
\end{equation*}
The derivative of the differentiable function $g\colon \beta \mapsto
\frac{\beta \theta +\frac{1}{3}}{\sqrt{ \beta +1}}$ is given by
$g'(\beta) = \frac{3 (\beta + 2) \theta - 1}{6 (\beta + 1)^{3/2}}$.
We have that $3 (\beta + 2) \theta - 1 \geq 6 \theta - 1
\geq 6 \times .717 - 1 > 0$, thus $g'(\beta) \geq 0$. Hence, the maximum
of $g(\beta)$ occurs at $\beta_2$, where $g(\beta_2)$ is slightly smaller than
$0.53968$. Thus, we can write
\begin{equation*}
\Phi \left[\frac{\beta_k \theta  +\frac{1}{3}}{\sqrt{ \beta
     _k+1}}\right] \leq  \Phi \left[\underset{\beta \in
    \left[0,\beta_2 \right]}{\max
} \frac{\beta  \theta  +\frac{1}{3}}{\sqrt{ \beta +1}}\right] \leq   \Phi [0.53968] < 0.7053.
\end{equation*}
Now, $m \mapsto \frac{0.007}{\sqrt{ 1-\frac{1}{m} }}$ is a decreasing
of function of $m$, thus, for $m\geq 2$ it is maximized at $m = 2$,
yielding $\frac{0.007}{\sqrt{ 1-\frac{1}{m} }} \leq 0.0099$. Hence,
the following holds:
\begin{equation*}
\Phi \left[\frac{\beta_k \theta  +\frac{1}{3}}{\sqrt{ \beta_k+1}}\right]+ \frac{0.007}{\sqrt{ 1-\frac{1}{m} }}  \leq  0.7053 + 0.0099 =
0.7152,
\end{equation*}
as required.\qed
\end{proof}

The case $k = 1$ is addressed by the following lemma.

\begin{lemma}
\label{lemma:4}
Let $X$ be a random variable distributed according to $B(m,
\frac{1}{m})$. Then, the following equality holds for any $m \geq 2$:
\begin{equation*}
\Pr\left[X \leq  1\right] \leq  \frac{3}{4}.
\end{equation*}
\end{lemma}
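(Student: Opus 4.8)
The plan is to evaluate $\Pr[X \le 1]$ in closed form and then reduce the claim to a monotonicity statement for a single-variable function of $m$. Since $X \sim B(m, \frac{1}{m})$, only the terms $\Pr[X = 0]$ and $\Pr[X = 1]$ contribute, and I would factor them as
\[
\Pr[X \le 1] = \Big(1 - \tfrac{1}{m}\Big)^{m} + \Big(1 - \tfrac{1}{m}\Big)^{m-1} = \Big(1 - \tfrac{1}{m}\Big)^{m-1}\Big(2 - \tfrac{1}{m}\Big).
\]
Evaluating at $m = 2$ gives exactly $\tfrac{1}{2}\cdot\tfrac{3}{2} = \tfrac{3}{4}$, which also shows the bound is tight. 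The whole task therefore reduces to proving that this expression is largest at $m = 2$.

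To that end, I would relax $m$ to a real variable and set $g(x) = (1-\frac{1}{x})^{x-1}(2 - \frac{1}{x})$ for $x \ge 2$, aiming to show that $g$ is non-increasing so that $g(x) \le g(2) = \tfrac{3}{4}$. Differentiating $\ln g(x) = (x-1)\ln(1 - \frac{1}{x}) + \ln(2 - \frac{1}{x})$ and simplifying, the two rational pieces collapse and I expect to obtain the clean form
\[
(\ln g)'(x) = \ln\Big(1 - \tfrac{1}{x}\Big) + \frac{2}{2x - 1}.
\]
It then suffices to show that this quantity is non-positive for all $x \ge 2$.

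The hard part is exactly this last transcendental inequality, which is delicate because it is tight in the limit $x \to \infty$, where both terms cancel to leading order. I would handle it with the substitution $u = 1/x \in (0, \tfrac{1}{2}]$, reducing the claim to $\psi(u) := \ln(1-u) + \frac{2u}{2-u} \le 0$ with $\psi(0) = 0$. A first-derivative argument alone fails, since $\psi'(u) = -\frac{1}{1-u} + \frac{4}{(2-u)^2}$ also vanishes at $u = 0$, so I would go one order further: $\psi''(u) = -\frac{1}{(1-u)^2} + \frac{8}{(2-u)^3}$, and the inequality $\psi''(u) < 0$ is equivalent to $8(1-u)^2 < (2-u)^3$. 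Writing $v = 1 - u \in [\tfrac{1}{2}, 1)$ turns this into $v^3 - 5v^2 + 3v + 1 > 0$, which factors as $(v-1)(v^2 - 4v - 1) > 0$; both factors are negative on $[\tfrac{1}{2}, 1)$, so the product is positive.

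Thus $\psi'' < 0$ on $(0, \tfrac{1}{2}]$, which forces $\psi' < 0$ and then $\psi < 0$ there, starting from $\psi'(0) = \psi(0) = 0$. This gives $(\ln g)' < 0$ on $[2, \infty)$, hence $g(m) \le g(2) = \tfrac{3}{4}$ for every integer $m \ge 2$, with equality only at $m = 2$, which completes the proof. The only genuine obstacle is the nested second-derivative step; once the convenient factorization of $\psi''$ is spotted, everything else is routine.
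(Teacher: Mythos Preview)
Your argument is correct and follows the same overall route as the paper: write $\Pr[X\le 1]=(1-\tfrac1m)^{m-1}(2-\tfrac1m)$, observe the value $\tfrac34$ at $m=2$, and prove monotonicity by showing
\[
\ln\!\Big(1-\tfrac1m\Big)+\frac{2}{2m-1}\le 0\qquad (m\ge 2),
\]
which is exactly the condition the paper obtains after computing $\rho'(m)=(m-1)^{m-1}m^{-m}\big(2+(2m-1)\ln(1-\tfrac1m)\big)$. The only difference is how this transcendental inequality is settled: the paper truncates the series $-\ln(1-\tfrac1m)\ge \tfrac1m+\tfrac1{2m^2}+\tfrac1{3m^3}$ and multiplies out to get $2+\tfrac1{6m^2}-\tfrac1{3m^3}\ge 2$, a one-line finish; you instead substitute $u=1/m$, push to the second derivative, and use the factorization $v^3-5v^2+3v+1=(v-1)(v^2-4v-1)$. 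Both are valid; the paper's Taylor truncation is a bit shorter, while your concavity argument avoids invoking the series expansion.
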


\ignore{
\begin{figure}[t]
\centering
\begin{tabular}{cc}
\ig{.33}{figures/binomial_gr26} & 
\ig{.33}{figures/binomial_gr27}\\
(a) & (b)
\end{tabular}
\caption[]{(a) Plot of $\rho$ as a function of $m$; (b) plots of $(2
  m-1) \log\left[\frac{1}{1-\frac{1}{m}}\right]$ and $2 +\frac{1}{6
m^2}-\frac{1}{3 m^3}$ as a function of $m$.}
\label{fig:rho}
\end{figure}
}

\begin{proof}
For $m \geq 2$, define the function $\rho$ by
\begin{equation*}
\rho (m) = \Pr\left[X \leq  1\right]
=  \sum_{j=0}^1 \binom{m}{j}\left(\frac{1}{m}\right)^j\left(1-\frac{1}{m}\right)^{m-j}\mspace{-10mu}
= \left(1-\frac{1}{m}\right)^m + \left(1-\frac{1}{m}\right)^{m-1}\mspace{-30mu}.
\end{equation*}
The value of the function for $m = 2$ is given by
$
\rho (2) =  \left(1-\frac{1}{2}\right)^2 + \left(1-\frac{1}{2}\right)^{2-1} = \frac{3}{4}.
$
Thus, to prove the result, it suffices to show that $\rho$ is
non-increasing for $m \geq 2$. The derivative of $\rho$ is given for
all $m \geq 2$ by
\begin{equation*}
\rho' (m) = (m - 1)^{m - 1} m^{-m} \left(2 + (2m - 1) \log\left[1 - \frac{1}{m}
  \right]\right).
\end{equation*}
Thus, for $m \geq 2$, $\rho' (m) \leq 0$ if and only if $2 + (2 m - 1)
\log\left[1 - \frac{1}{m} \right]\leq 0$.  Now, for $m \geq 2$, using
the first three terms of the expansion $- \log\left[1 - \frac{1}{m}
\right] = \sum_{k=1}^{\infty } \frac{1}{k} \frac{1}{m^k}$, we can
write
\begin{equation*}
- (2m - 1) \log\bigg[1 - \frac{1}{m}
\bigg] \geq  (2m - 1)
\left(\frac{1}{m} + \frac{1}{2m^2} + \frac{1}{3 m^3}\right) = 2 + \frac{1}{6
m^2} - \frac{1}{3m^3} \geq 2,
\end{equation*}
where the last inequality follows from $\frac{1}{6 m^2}-\frac{1}{3
  m^3} \geq 0$ for $m \geq 2$. This shows that $\rho'(m) \leq 0$
for all $m \geq 2$ and concludes the proof. \qed
\end{proof}

We now complete the proof of our main result, by combining the previous
lemmas and corollaries.

\begin{proof}[of Theorem~\ref{th:main}]
By Corollary~\ref{corollary:1}, we can write
\begin{align*}
& \Pr_{X \sim B(m,p)}[X \geq  \E[X]] \\
& \geq 1 - \max_{k \in
  \set{1, \ldots , m - 1}}\Pr_{X \sim B(m,\frac{k}{m})}[X \leq k]\\
& = 1 - \max \Big\{\Pr_{X \sim B(m,\frac{1}{m})}[X \leq  1],  
\max_{k \in \set{2, \ldots , m - 1}} \Pr_{X \sim B(m,\frac{k}{m})}[X \leq
 k] \Big\} \\
& \geq 
1- \max  \set{\frac{3}{4},  0.7152} = 1 - \frac{3}{4} = \frac{1}{4},
\end{align*}
where the last inequality holds by Corollary~\ref{corollary:2} and
Lemma~\ref{lemma:4}.\qed
\end{proof}

As a corollary, we can produce a bound on the probability that a binomial random variable is less than or equal to its expected value, instead of greater than or equal to its expected value.

\begin{corollary}
\label{corollary:3}
  For any positive integer $m$ and any probability p such that \\$p < 1 - \frac{1}{m}$, let $X$ be a random variable distributed according
  to $B(m, p)$.  Then, the following inequality holds:
\begin{equation}
\label{eq:main}
\Pr\big[X \leq  \E\left[X \right]\big] > \frac{1}{4}, 
\end{equation}
where $\E[X] = mp$ is the expected value of $X$. 
\end{corollary}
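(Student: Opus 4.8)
The plan is to deduce this from Theorem~\ref{th:main} via the standard symmetry of the binomial distribution, which exchanges the roles of successes and failures. Given $X \sim B(m,p)$, I would introduce the random variable $Y = m - X$, which counts failures rather than successes. Since each trial is a failure with probability $1 - p$ independently of the others, $Y$ is distributed according to $B(m, 1 - p)$, and its mean is $\E[Y] = m(1-p) = m - \E[X]$.

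The key observation is that the event of interest for $X$ corresponds exactly to the analogous event for $Y$: we have $X \leq \E[X]$ if and only if $m - X \geq m - \E[X]$, that is, if and only if $Y \geq \E[Y]$. Because this is an identity of events and not merely of probabilities, it immediately yields $\Pr[X \leq \E[X]] = \Pr[Y \geq \E[Y]]$.

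It then remains to check that Theorem~\ref{th:main} applies to $Y$. The theorem requires the success probability of $Y$, namely $1 - p$, to exceed $\frac{1}{m}$. But the hypothesis $p < 1 - \frac{1}{m}$ is exactly equivalent to $1 - p > \frac{1}{m}$, so the condition is met. Applying Theorem~\ref{th:main} to $Y \sim B(m, 1-p)$ therefore gives $\Pr[Y \geq \E[Y]] > \frac{1}{4}$, and combining this strict inequality with the event identity above completes the argument.

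Since each step is elementary, I do not anticipate a genuine obstacle: the entire content is the reflection $Y = m - X$. The only points requiring care are confirming that $Y$ is again binomial with the complementary parameter $1-p$ and that the parameter constraint $p < 1 - \frac{1}{m}$ translates precisely into the hypothesis $1 - p > \frac{1}{m}$ needed to invoke the main theorem. No new analytic estimates are needed beyond Theorem~\ref{th:main} itself.
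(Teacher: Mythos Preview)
Your proposal is correct and follows essentially the same approach as the paper: both exploit the symmetry $p \mapsto 1-p$ of the binomial to reduce Corollary~\ref{corollary:3} to Theorem~\ref{th:main}. The paper carries this out by an explicit change of summation index in the cumulative sums, whereas you phrase it via the random-variable identity $Y = m - X \sim B(m,1-p)$; these are the same argument in slightly different language.
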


\begin{proof}
Let $G(m,p)$ be defined as
\begin{equation*}
G(m,p)  \equiv \Pr\big[X \leq  \E\left[X \right]\big]  =  \sum_{j = 0}^{\lfloor m p\rfloor} \binom{m}{j}p^j(1 - p)^{m - j}
\end{equation*}
and let $F(m,p)$ be defined as before as
\begin{equation*}
F(m,p)  \equiv \Pr \big[X \geq \E\left[X \right] \big]  = \sum_{j = \lceil m p\rceil }^m \binom{m}{j}p^j(1 - p)^{m - j}.
\end{equation*}
Then, we can write, for $q=1-p$,
\begin{align*}
G(m, p) &= G(m,1-q) 
\\& = \sum_{j = 0}^{\lfloor m (1-q)\rfloor}  \binom{m}{j}(1-q)^j q^{m - j}
\\& =  \sum_{t = m - \lfloor m (1-q)\rfloor}^{m} \binom{m}{m-t}(1-q)^{m-t} q^{t}  
\\& =  \sum_{t = \lceil m q \rceil }^{m} \binom{m}{t}(1-q)^{m-t} q^{t}  
\\& =  F(m, q) = F(m, 1-p) > \frac{1}{4}
\end{align*}
with the inequality at the end being an application of theorem~\ref{th:main}, which holds so long as $q > \frac{1}{m}$, or equivalently, so long as $p < 1-\frac{1}{m}$. \qed
\end{proof}

\section{Conclusion}

We presented a rigorous justification of an inequality needed for the
proof of relative deviations bounds in machine learning theory. To our knowledge,
no other complete proof of this theorem exists in the literature, despite its repeated use.

\section*{Acknowledgment}
We thank Luc Devroye for discussions about the topic of this work. 

\bibliographystyle{abbrv}
\bibliography{binomial}

\begin{thebibliography}{1}

\bibitem{AnthonyShawe-Taylor1993}
M.~Anthony and J.~Shawe-Taylor.
\newblock A result of {Vapnik} with applications.
\newblock {\em Discrete Applied Mathematics}, 47:207 -- 217, 1993.

\bibitem{CortesMansourMohri2010}
C.~Cortes, Y.~Mansour, and M.~Mohri.
\newblock Learning bounds for importance weighting.
\newblock In {\em NIPS}, Vancouver, Canada, 2010. MIT Press.

\bibitem{Jaeger2005}
S.~A. Jaeger.
\newblock Generalization bounds and complexities based on sparsity and
  clustering for convex combinations of functions from random classes.
\newblock {\em Journal of Machine Learning Research}, 6:307--340, 2005.

\bibitem{Rigollet2011}
P.~Rigollet and X.~Tong
\newblock Neyman-Pearson Classification, Convexity and Stochastic Constraints
\newblock {\em Journal of Machine Learning Research}, 12:2831--2855, 2011.

\bibitem{JohnsonKempKotz2005}
N.~Johnson, A.~Kemp, and S.~Kotz.
\newblock {\em Univariate Discrete Distributions}.
\newblock Number v. 3 in Wiley series in probability and Statistics. Wiley \&
  Sons, 2005.

\bibitem{JohnsonKotzBalakrishnan1995}
N.~Johnson, S.~Kotz, and N.~Balakrishnan.
\newblock {\em Continuous univariate distributions}.
\newblock Number v. 2 in Wiley series in probability and mathematical
  statistics: Applied probability and statistics. Wiley \& Sons, 1995.

\bibitem{LeschJeske2009}
S.~M. Lesch and D.~R. Jeske.
\newblock Some suggestions for teaching about normal approximations to poisson
  and binomial distribution functions.
\newblock {\em The American Statistician}, 63(3):274--277, 2009.

\bibitem{Vapnik98}
V.~N. Vapnik.
\newblock {\em Statistical Learning Theory}.
\newblock Wiley-Interscience, 1998.

\bibitem{Vapnik2006}
V.~N. Vapnik.
\newblock {\em Estimation of Dependences Based on Empirical Data}.
\newblock Springer-Verlag, 2006.

\end{thebibliography}
\end{document}